\newtheorem{theorem}{Theorem}[section]
\theoremstyle{definition}
\theoremstyle{remark}
\DeclareMathOperator*{\argmin}{arg\,min}
\begin{document}

%

%

\twocolumn[

\aistatstitle{Embarrassingly Parallel Variational Inference \\in Nonconjugate Models}

\aistatsauthor{ Willie Neiswanger \And Chong Wang \And Eric Xing }

\aistatsaddress{
Machine Learning Department\\
Carnegie Mellon University\\
\texttt{willie@cs.cmu.edu}
\And 
AI Lab\\
Baidu Research\\
\texttt{chongw@cs.princeton.edu}
\And 
School of Computer Science\\
Carnegie Mellon University \\
\texttt{epxing@cs.cmu.edu}
} ]

\begin{abstract}
We develop a parallel variational inference (VI) procedure
for use in data-distributed settings, where each machine
only has access to a subset of data and runs VI
independently, without communicating with other machines.
This type of ``embarrassingly parallel'' procedure has
recently been developed for MCMC inference algorithms;
however, in many cases it is not possible to directly extend
this procedure to VI methods without requiring 
certain restrictive exponential family 
conditions on the form of the model. Furthermore, most
existing (nonparallel) VI methods are restricted to use on
conditionally conjugate models, which limits their
applicability.  To combat these issues, we make use of the
recently proposed nonparametric VI to facilitate an
embarrassingly parallel VI procedure that can be 
applied to a wider scope of models, including to 
nonconjugate models. We derive our embarrassingly parallel
VI algorithm, analyze our method theoretically, and
demonstrate our method empirically on a few nonconjugate
models.
\end{abstract}

\vspace{-2mm}
\section{Introduction}
\label{introduction}
Many large, modern datasets are collected and stored in a
distributed fashion by multiple sensors or data-collecting
agents. Examples of this include medical data recorded in
hospitals throughout a country, weather data gathered by a
collection of sensors, web data scraped by a network of
machines, and cell phone data collected on users' phones.
Inference algorithms that can operate in these distributed
settings---by processing subsets of data separately and in
parallel---are particularly advantageous. This is because
they mitigate the need for transfering data to a central
location for analysis, reduce both the memory usage and
computation time of inference
\cite{low2014graphlab,neiswanger2013asymptotically}, allow
for continuous data collection from independently operating
agents \cite{campbell2014approximate}, and allow for
sensitive data to be processed independently in secure
locations (which can yield privacy guarantees
\cite{nissim2007smooth}).

Variational inference (VI) methods are general procedures
for approximate inference in Bayesian models, and they have
been applied successfully in a wide variety of domains
\cite{jordan1999introduction,beal2003variational}. This
paper is concerned with developing better VI methods for use
in distributed settings. One major issue with most existing
parallel methods is that they often require synchronization
between machines at regular intervals
\cite{zhai2012mr,nallapati2007parallelized,yan2009parallel}.
Communication between machines due to this synchronization
can greatly reduce the efficiency of these procedures, as
each machine must wait for information from other machines
before proceeding with computation. Furthermore,
communication requirements may increase the difficulty of
system implementation and maintenance
\cite{low2014graphlab}, and necessitate the transfer of
(potentially sensitive) data between machines
\cite{williamson2013parallel}.

We aim to develop a new ``embarrassingly parallel'' algorithm
for VI in data-distributed settings, which is a type of
parallel algorithm where there is no regular communication
between machines. Given a dataset partitioned over a
collection of machines, embarrassingly parallel VI methods 
carry out the following two steps:
\vspace{-2mm}
\begin{enumerate}[itemsep=1pt]
\item Perform variational inference on the subset of data on
each machine in parallel (independently, without
communication between machines).
\item Combine the results from all machines to yield a
variational inference result for the full-data posterior
distribution.
\end{enumerate}
\vspace{-2mm}
These two steps are only performed once, and there is only
communication between machines at one point in the second
step, when collecting results from each of the local
instances of VI.

Recently, progress has been made toward this goal for mean
field variational inference methods limited to models
with certain exponential family restrictions on the
likelihood and prior distribution 
\cite{broderick2013streaming,campbell2014approximate}.
These methods use a decomposition of the posterior that
takes advantage of closedness properties of exponential
family densities under products and quotients. However,
these modeling assumptions are fairly restrictive, and this
decomposition cannot be applied to many popular models
(including logistic regression, correlated topic models, and
nonlinear matrix factorization models). Additionally, this
approximation family is typically inadequate to capture
multimodal densities \cite{gershman2012nonparametric}.

A separate line of work has aimed to develop
``nonconjugate'' variational inference methods for models
without tractable exponential family conditional
distributions
\cite{gershman2012nonparametric,wang2013variational}.
Similar to these methods, we would like a general inference
algorithm that can be applied to a wide class of Bayesian
models, yet operates in this embarrassingly parallel
setting. However, the variational families employed by these
nonconjugate methods are not in a form that allows us to
apply the above-mentioned decomposition strategy for 
parallelization.

Recent papers in the Markov chain Monte Carlo (MCMC)
literature have introduced an alternative decomposition of
the posterior for parallel inference
\cite{Scott:2013,neiswanger2013asymptotically,wang2013parallel},
which involves the product of so called \emph{subposterior
densities} (i.e. posterior densities given a subset of data
with an underweighted prior). We apply this new
decomposition to a nonconjugate variational inference method
called nonparametric variational inference (NVI)
\cite{gershman2012nonparametric} to perform
low-communication, parallel inference in a general class of
models. In particular, we only require weak
differentiability conditions on the joint log probability.

The main contribution of our method is that it provides a
way to perform embarrassingly parallel inference in data
distributed settings for a more-general class of 
Bayesian models without requiring conditional conjugacy 
or exponential family assumptions on the 
model likelihood or prior. In the
following sections, we derive the posterior decomposition
used by our method, show how we can combine local
nonparametric variational approximations to form a
variational approximation to the full-data posterior
density, and analyze the computational complexity of our
algorithms.  Finally, we demonstrate our method empirically
on a few nonconjugate Bayesian models.

\section{Preliminaries}
We describe existing work on embarrassingly parallel
VI with exponential family restrictions, existing work on 
embarrassingly parallel MCMC, and the difficulties with
extending these methods to variational inference in more 
general, nonconjugate models.

Suppose we have a large set of $N$ i.i.d. data points, $x^N
= \{x_1,\ldots,x_N\}$, a likelihood for these data
parameterized by $\theta \in \mathbb{R}^d$, written $p(x^N |
\theta)$, and a prior density for $\theta$, written $p(\theta)$. We
can write the posterior density given all $N$ data points
(which we will also refer to as the ``full-data'' posterior)
as 
\vspace{-2mm}
\begin{align}
    p(\theta|x^N) \propto p(\theta)p(x^N|\theta) =
    p(\theta)\prod_{i=1}^N p(x_i|\theta).
\end{align}
Now suppose the data $x^N$ is partitioned into $M$ subsets 
$\{ x^{n_1},\ldots,x^{n_M} \}$ of sizes $n_1,\ldots,n_M$,
and distributed over $M$ machines. Recent works in
embarrassingly parallel VI \cite{broderick2013streaming,
campbell2014approximate} have proposed the following
solution for inference in this setting.  First, in an
embarrassingly parallel fashion, compute
\vspace{-2mm}
\begin{align}
    q_1^*,\ldots,q_M^* = \argmin_{q_1,\ldots,q_M}
    \sum_{m=1}^M \text{KL}\left[q_m(\theta) ||
    p(\theta|x^{n_m})\right]
\end{align}
where $p(\theta|x^{n_m})$ is the posterior given a subset of
data $x^{n_m}$.  Second, form the full-data posterior
variational approximation with
\vspace{-2mm}
\begin{align}
    \label{eq:classicEPVI}
    q^*(\theta) \propto \left( \prod_{m=1}^M q_m^*(\theta) \right) /
    p(\theta)^{M-1}.
\end{align}
The justification for this solution is that the full-data
posterior can be decomposed as $p(\theta|x^N)$
$\propto$ $\left( \prod_{m=1}^M p(\theta|x^{n_m}) \right) /
p(\theta)^{M-1}$, and further, it can be shown that
the above objective retains an important property of the
classic (nonparallel) KL objective: if the objective is zero
then the full-data approximation $q^*(\theta)$ is equal to
the full-data posterior $p(\theta|x^N)$. I.e., if 
$\sum_{m=1}^M \text{KL}[q_m(\theta) || p(\theta|x^{n_m})]$
$=$ $0$ $\implies$ $\text{KL}[q^*(\theta)||p(\theta|x^N)]$
$=$ $0$ $\implies$ $q^*(\theta)$ $=$ $p(\theta|x^N)$.

However, this solution has a few major restrictions on the
form of the variational approximation and model.  Namely, to
form $q^*$, these methods must tractably compute the product
of the $M$ variational approximations divided by the prior
density (equation~(\ref{eq:classicEPVI})). These methods do this by
limiting their scope to conditionally conjugate exponential
family models, and then using mean field variational methods
that restrict the variational approximation to the same
exponential family as the prior.

To attempt to extend the scope of models to which
embarrassingly parallel VI methods can be applied, we turn
to a separate line of work on embarrassingly parallel MCMC
methods \cite{Scott:2013,neiswanger2013asymptotically,
wang2013parallel}), which use an alternative decomposition
of the posterior distribution.  Let the $m^{\text{th}}$
\emph{subposterior} density, $p_m(\theta)$, be defined as
the posterior given the $m^\text{th}$ data subset with an
underweighted prior, written $p_m(\theta) =
p(\theta)^{\frac{1}{M}}p(x^{n_m}|\theta)$. This is defined
such that the product of the $M$ subposterior densities is
proportional to the full-data posterior, i.e.
\begin{align}
    p_1 \cdots p_M (\theta)  \propto p(\theta)\prod_{m=1}^M
    p(x^{n_m}|\theta) \propto p(\theta | x^N).
\end{align}
In these methods, a subposterior density estimate
$\widehat{p}_m(\theta)$ is learned on each machine 
(via sampling), and the product of these estimates 
$\prod_{m=1}^M \widehat{p_m}(\theta)$ yields an
approximation of the full-data posterior density.

However, we cannot directly apply this new decomposition to
typical mean field variational inference approximations (as
is done in embarrassingly parallel VI) for the following two
reasons: 
\vspace{-2mm}
\begin{enumerate}[itemsep=1pt]
\item The underweighted prior $p(\theta)^\frac{1}{M}$ in the
subposterior density may lose conjugacy necessary for the
requisite exponential-family-conditionals. Hence, it may not
be easy to directly apply these VI methods to approximate
the subposterior.
\item Even if we are able to learn a variational
approximation for each subposterior, the product of
subposterior variational approximations may not have a
tractable form that we can analytically compute.
\end{enumerate}
\vspace{-2mm}
Therefore, to use this alternative decomposition to apply VI
to a broader scope of models, we need a family of variational
approximations that can be run on general subposterior 
densities (including those of nonconjugate models)
while maintaining a tractable density product that can be
analytically computed.



\section{Embarrassingly Parallel Variational Inference 
in Nonconjugate Models}
Embarrassingly parallel variational inference (EPVI)
in nonconjugate models is a
parallel approximate Bayesian inference method for
continuous posterior distributions. It is generally
applicable, requiring only that the first two derivatives of
the log-joint probability density are computable. For a
dataset partitioned over $M$ machines, VI 
is run in parallel on each machine to approximate the $M$
subposterior densities; afterwards, the local subposterior
approximations are combined by computing their product,
which approximates the full-data posterior density. Each
machine performs variational inference without sharing
information, in an embarrassingly parallel manner. We
summarize this procedure in Algorithm~\ref{epvi_algorithm}.

\begin{algorithm}[!ht]
    \caption{Embarrassingly Parallel Variational Inference in Nonconjugate Models} 
    \label{epvi_algorithm}
    \KwIn{Partitioned dataset $\{x^{n_1},\ldots,x^{n_M}\}$.}
    \KwOut{Variational approximation $q^*(\theta)$ for the
    full-data posterior density $p(\theta | x^N)$.}
    \vspace{2pt}
    \ForPar{$m=1,\ldots,M$}{
        Learn a variational approximation $q_m^*(\theta)$
        for the $m^{th}$ subposterior $p_m(\theta)$, given
        data $x^{n_m}$.
    }
    Compute product $\prod_{m=1}^M q_m^*(\theta)$ of
    subposterior approximations to yield the full-data
    variational approximation $q^*(\theta)$.
\end{algorithm}

We illustrate our EPVI procedure for a Bayesian logistic
regression model on a toy dataset in
Figure~\ref{fig:epviIllustration}.

\begin{figure*}[!ht]
    \center{\includegraphics[width=1\textwidth]{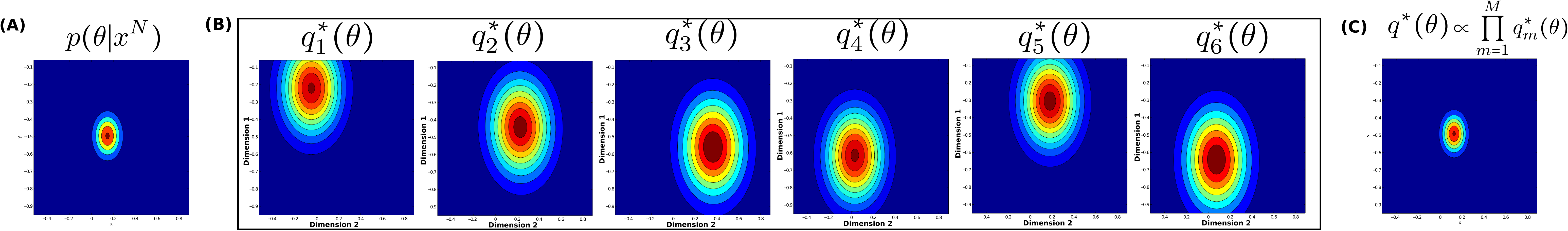}}
    \caption{\label{fig:epviIllustration} Illustration of our
    embarrassingly parallel VI method,
    shown for a Bayesian logistic regression model on a toy
    dataset. In (a) we show the first two dimensions of the
    full-data posterior density. In (b) we show the first
    two dimensions of each of the $M=6$ subposterior
    variational approximations after running VI on 
    each subset of data independently. In (c)
    we show the first two dimensions of the combined product
    density (formed using the six subposterior variational
    approximations), which recovers the posterior shown in
    (a).}
\end{figure*}

\subsection{EPVI with Nonparametric Variational Inference}
In a recently proposed method known as nonparametric
variational inference (NVI)
\cite{gershman2012nonparametric}, a posterior approximation
is selected from a variational family of densities of the
form $q(\theta) = \frac{1}{K} \sum_{k=1}^K
\mathcal{N}_d(\theta | \mu_k, \sigma_k^2 I_d)$.  Some
advantages of this method are that it can capture multimodal
posterior distributions, can be applied to many
nonconjugate models (in fact, the only requirement is that
the first two derivatives of the log joint probability are
computable), and has an efficient algorithm to optimize the
variational objective. In our case, NVI allows us to perform
variational inference on subposterior densities without
worrying if they retain the conjugacy necessary to easily
apply typical mean-field approximations, and also allows us
to develop a method to combine the subposterior variational
approximations (i.e.  allows us to derive an analytic form
for the product of these approximations) to produce a
full-data posterior variational approximation. After running
this procedure on a subset of data $x^{n_m}$ on a machine
$m$, we can write the inferred variational approximation for
the subposterior distribution as 
\vspace{-1mm}
\begin{align}
q_m^*(\theta) = \frac{1}{K} \sum_{k=1}^K
\mathcal{N}_d(\theta | \mu_k^{(m)}, \sigma_k^{2^{(m)}} I_d).
\end{align}
Due to this choice of $q_m^*$, we have an analytic form for
the product of these densities, $\prod_{m=1}^M
q^*_m(\theta)$, which gives us a variational approximation
for the subposterior density product (and hence for the
full-data posterior). In particular, the product of these
$M$ mixture-of-Gaussians variational densities gives a
(non-uniformly weighted) mixture-of-Gaussians density with
$K^M$ components. We can write this product mixture as
\vspace{-2mm}
\begin{align}
    q^*(\theta) &\propto \prod_{m=1}^M q^*_m(\theta) =
    \frac{1}{K^M} \prod_{m=1}^M  \sum_{k_m = 1}^K
    \mathcal{N}_d \left( \theta | \mu_{k_m}^{(m)},
    \sigma^{2^{(m)}}_{k_m} I_d \right) \nonumber \\ &=
    \sum_{k_1=1}^K \cdots \sum_{k_M=1}^K w_{k\cdot}
    \mathcal{N}_d (\theta | \mu_{k\cdot}, \sigma_{k\cdot}^2
    I_d)  \label{mixDenProd}
\end{align}
where we use $k\cdot = (k_1,\ldots,k_M)$ to denote the vector
of M subposterior-component-indices (one from each
subposterior variational approximation mixture) associated
with a given component in this product mixture, and where
\vspace{-2mm}
\begin{align}
    \sigma_{k\cdot}^2 &= \left( \sum_{m=1}^M
    \left(\sigma_{k_m}^{2^{(m)}} \right)^{-1} \right)^{-1}\\
    \mu_{k\cdot} &= \sigma_{k\cdot}^2 I_d \left(
    \sum_{m=1}^M 
    \left( \left( \sigma^{2^{(m)}}_{k_m} \right)^{-1} I_d
    \right) \mu_{k_m}^{(m)}  \right)\\
    w_{k\cdot} &= \frac{ \prod_{m=1}^M \mathcal{N}_d 
	\left( \mu_{k_m}^{(m)} | \mu_{k\cdot}, 
        \sigma^{2^{(m)}}_{k_m} I_d  \right) }
	{ \mathcal{N}_d\left( \mu_{k\cdot} | \mu_{k\cdot}, 
        \sigma^{2}_{k\cdot} \right) }
\end{align}

\subsection{Computing the Variational Density Product
Mixture}
After learning the optimal local parameters $\{
\mu_{k_m}^{(m)}, \sigma_{k_m}^{2^{(m)}} \}_{k_m=1}^K$ for each of 
the $m \in \{1,\ldots,M\}$ subposteriors, we wish to form a
variational approximation to the full-data posterior density
by taking the product of the $M$ mixtures. However,
computing the parameters and weights for all 
$K^M$ components in the product mixture becomes infeasible as 
$M$ grows.

We typically perform Bayesian inference in order to compute
expectations with respect to, and explore, the posterior
distribution. In practice, one common way to achieve this is
to sample from the posterior, and then compute a sample 
expectation; this is done in both MCMC methods and
in VI methods (in the latter case, to compute expectations 
with respect to a
variational approximation after VI has finished runnning
\cite{gershman2012nonparametric,blei2004variational,
ranganath2013black}). Hence, 
instead of computing the product mixture (and afterwards, 
sampling from it to compute expectations), 
our solution is to bypass this step and 
directly generate samples from the product mixture. 
We give a procedure that allows us to compute 
expectations with respect to the variational approximation 
in this common sampling manner without requiring us 
to actually compute the variational approximation.

We give our method for sampling from the full-data
variational approximation in
Algorithm~\ref{sampleProductComponents}, and then prove that
this yields correct samples. The intuitive idea behind our
algorithm is the following. To sample from a mixture, one can
first sample a component index (proportional to the
component weights) and then sample from the chosen
mixture component. We therefore need a way to sample product 
mixture components (proportional to their weights) without
first computing all of the $K^M$ component weights. Our solution
is to form a Markov chain over the product mixture component
indices, and prove that its stationary distribution is a
categorical distribution with probability mass values
proportional to the product mixture component
weights.  Hence, at each step in this Markov chain, we can produce 
a sample from the full variational approximation while only
needing to compute a single new product mixture component.

\vspace{-2mm}
\begin{algorithm}[!ht]
    \caption{Markov chain for sampling variational density product
    mixture components} 
    \label{sampleProductComponents}
    \KwIn{Number of samples $R$, number of
    burn-in steps $b$, learned subposterior variational
    approximations $\{q_1^*(\theta),\ldots,q_M^*(\theta)
    \}$.}
    \KwOut{Parameters $\{ \mu_r, \sigma_r^2\}_{r=1}^R$ for the
    $R$ sampled product mixture components.}
    \vspace{2pt}
    Draw $k\cdot$ $=$ $(k_1,\ldots,k_M)
        \stackrel{\text{iid}}{\sim}
        \text{Unif}(\{1,\ldots,K\})$\tcc*{Initialize Markov chain} 
    \For{$s = 1,\ldots,b+R$}{
        Draw $m \sim \text{Unif}(\{1,\ldots,M\})$\\
        Set $c\cdot = (c_1,\ldots,c_M) \leftarrow k\cdot$\\
        Draw $c_m \sim
            \text{Unif}(\{1,\ldots,K\})$\\
        Draw $u \sim \text{Unif}([0,1])$;\\
        \If{$u < w_{c\cdot}/w_{k\cdot}$}{
            Set $k\cdot \leftarrow c\cdot$\\
        }
        \If{$s > b$} {
        Set $\mu_{s-b} \leftarrow \mu_{k\cdot}$\tcc*{Compute
            mean of sampled mixture component}
        Set $\sigma_{s-b}^2 \leftarrow
            \sigma_{k\cdot}^2$\tcc*{Compute var of sampled
            mixture component}
        }
    }
\end{algorithm}
Note that in Algorithm~\ref{sampleProductComponents}, at each step 
in the Markov chain, we perform two simple steps to sample the next 
product mixture component: we select a subposterior uniformly at random  
(line 3), and then re-draw one of its $K$ components uniformly at 
random (line 5); this specifies a new product mixture component. 
We then compute the ratio of the weight of this new 
product mixture component with the previous component's weight 
(line 7) and accept or reject this proposal (line 8). We then compute the 
parameters of the sampled component (lines 10-11).

\textbf{Correctness of Algorithm~\ref{sampleProductComponents}.} 
We prove that Algorithm~\ref{sampleProductComponents} defines a
Markov chain whose stationary distribution is the
distribution over the $K^M$ components in the product
mixture density.

\begin{theorem}
The procedure given by Algorithm~\ref{sampleProductComponents} defines
a Markov chain whose stationary distribution is the categorical
distribution (over $K^M$ categories) with category-probability 
parameter equal to the vector of product mixture component weights.
\end{theorem}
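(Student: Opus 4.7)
The plan is to recognize Algorithm~\ref{sampleProductComponents} as a Metropolis--Hastings sampler on the finite state space $\{1,\ldots,K\}^M$ of product-mixture component indices, with target distribution $\pi(k\cdot) = w_{k\cdot} / \sum_{j\cdot} w_{j\cdot}$. First I would read off the one-step proposal kernel $q(c\cdot \mid k\cdot)$ from lines 3--5: conditional on $k\cdot$, a coordinate $m$ is drawn uniformly from $\{1,\ldots,M\}$, the $m$-th entry is replaced with a fresh $\text{Unif}(\{1,\ldots,K\})$ draw, and the others are left fixed. A short case analysis gives $q(c\cdot \mid k\cdot) = 1/(MK)$ whenever $c\cdot$ and $k\cdot$ differ in exactly one coordinate, $q(k\cdot \mid k\cdot) = 1/K$ (summing over the $M$ coordinate choices that each have a $1/K$ chance of redrawing the same value), and $q(c\cdot \mid k\cdot) = 0$ otherwise. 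In particular $q(c\cdot \mid k\cdot) = q(k\cdot \mid c\cdot)$, so the proposal is symmetric.

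Next I would verify detailed balance. Because the proposal is symmetric, the Metropolis--Hastings acceptance probability collapses to $\alpha(k\cdot, c\cdot) = \min\{1, \pi(c\cdot)/\pi(k\cdot)\} = \min\{1, w_{c\cdot}/w_{k\cdot}\}$, which is exactly the acceptance rule implemented by lines 6--8: with $u \sim \text{Unif}([0,1])$, the event $\{u < w_{c\cdot}/w_{k\cdot}\}$ occurs with probability $\min\{1, w_{c\cdot}/w_{k\cdot}\}$. The transition kernel of the chain is $P(k\cdot, c\cdot) = q(c\cdot \mid k\cdot)\alpha(k\cdot, c\cdot)$ for $c\cdot \neq k\cdot$, and detailed balance $\pi(k\cdot)P(k\cdot, c\cdot) = \pi(c\cdot)P(c\cdot, k\cdot)$ follows from the standard Metropolis identity, so $\pi$ is stationary for the chain.

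To conclude that $\pi$ is the \emph{unique} stationary distribution, I would check irreducibility and aperiodicity on the finite state space. For irreducibility, any two indices $k\cdot, c\cdot \in \{1,\ldots,K\}^M$ can be connected by flipping one disagreeing coordinate at a time; each such single-coordinate proposal has positive probability $1/(MK)$ and positive acceptance probability, since each $w_{c\cdot}$ is a product of strictly positive Gaussian density values and hence strictly positive. For aperiodicity it suffices to note $q(k\cdot \mid k\cdot) = 1/K > 0$, so every state has a self-loop. Together with the finite state space, standard Markov chain theory then gives uniqueness of the stationary distribution $\pi$, which is the claimed categorical distribution with probabilities proportional to the product-mixture weights.

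The only genuinely delicate step is the symmetry calculation: one must be careful that $k\cdot$ is produced from $k\cdot$ by all $M$ coordinate choices (giving $1/K$) while a genuine neighbor is produced by exactly one, yet the relation $q(c\cdot \mid k\cdot) = q(k\cdot \mid c\cdot)$ still holds in both cases. Everything after that is a routine instance of the Metropolis--Hastings construction.
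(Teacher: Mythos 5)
Your proof is correct, and it reaches the conclusion by a genuinely different (and more self-contained) route than the paper. The paper reads lines 3--8 as a random-scan Metropolis-within-Gibbs sampler on the index space $\{1,\ldots,K\}^M$: line 3 selects the Gibbs coordinate, line 5 makes an independent uniform Metropolis proposal for that coordinate's conditional, and lines 6--8 accept or reject; correctness is then delegated to the cited literature on Metropolis-within-Gibbs rather than verified. You instead integrate out the coordinate choice and treat the composite move as a single Metropolis kernel on the product space, noting the proposal is symmetric ($1/(MK)$ between states differing in exactly one coordinate, $1/K$ on the diagonal, $0$ otherwise), so the rule $u < w_{c\cdot}/w_{k\cdot}$ is exactly the Metropolis acceptance probability $\min\{1, \pi(c\cdot)/\pi(k\cdot)\}$ for the target $\pi \propto w$, and detailed balance follows directly. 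The two arguments are equivalent in substance---both target the categorical distribution over the $K^M$ component indices---but yours buys two things the paper leaves implicit: a from-scratch verification in place of an appeal to an external result, and the irreducibility/aperiodicity check (using strict positivity of the weights, which are ratios of Gaussian density values, plus the self-loop $q(k\cdot \mid k\cdot)=1/K>0$) that justifies calling $\pi$ \emph{the} stationary distribution rather than merely \emph{a} stationary one. The symmetry bookkeeping you flag as delicate is indeed the only step that requires care, and you handle it correctly.
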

\begin{proof}
\vspace{-2mm}
Note that each of the $K^M$ product mixture components 
is associated with an M-dimensional vector
$k\cdot = (k_1,\ldots,k_M) \in \{1,\ldots,K\}^M$ 
(where $k_m$ denotes the index of the $m^\text{th}$ subposterior's
component that contributed a factor to this product mixture component). 
Hence, instead of sampling an index from a categorical 
distribution, we can equivalently view our
task as sampling an M-dimensional vector from a joint 
distribution over the space $ \{1,\ldots,K\}^M$, where each element
in this space has a probability mass proportional to its associated
product mixture component weight. We can therefore perform
Gibbs sampling over this space, where we sample from the 
conditional distribution over a subposterior component 
index $k_m$ given all other component indices.
To compute and then sample from this conditional distribution,
we could iterate over the $K$ possible values of $k_m$ 
(and compute the component weight of each); however, this could 
potentially be expensive for large $K$. Instead, we sample
one of the $K$ values for $k_m$ uniformly at random (line~5),
and our algorithm becomes a Metropolis-within-Gibbs algorithm 
\cite{gilks1995adaptive} where we've used an independent Metropolis 
proposal \cite{giordani2010adaptive,atchade2005improving} 
(which we achieve by accepting or 
rejecting, in lines~6-8, the independent Metropolis proposal 
made in line~5). Note that the dimension $m$ along which we take 
a Gibbs sampling step is chosen in line 3.
Since this Metropolis-within-Gibbs algorithm 
has been shown to have the correct stationary 
distribution \cite{gilks1995adaptive}, our proof is complete.
\end{proof}

\vspace{-2mm}
We describe the complexity of
Algorithm~\ref{sampleProductComponents} in
Section~\ref{sec:complexity}. In
Section~\ref{sec:experiments}, we verify that this
algorithm achieves the same results as taking expectations
after computing the mixture product exactly, while
drastically speeding-up performance.

\vspace{-1mm}
\paragraph{Sequential subposterior subset products.}
In some cases, it may be simpler to sample from the product
mixture in a sequential fashion by sampling from the product
of only a few subposteriors multiple times: we
first sample $R$ components from the product of groups of
$\tilde{M} < M$ approximations, and then repeat this process 
on the resulting (uniform) mixtures formed by the sampled 
components. This continues until samples from only one mixture
remain. For example, one could begin by sampling components from
the product of all $\frac{M}{2}$ pairs (leaving one subposterior
approximation alone if $M$ is odd), thereby forming 
$\frac{M}{2}$ uniformly weighted mixtures comprised of the
sampled components. This process is then repeated---forming 
pairs and sampling from the pair product 
mixture---until there are only samples from one product mixture 
remaining (which are approximate samples from the
full-data posterior). This method is potentially advantageous 
because each intermediate round of product mixture sampling could be done 
in parallel. However, more samples are potentially required from each
intermediate round to generate valid samples from the 
full variational approximation at the final product. We compare the 
effectiveness of this method in Section~\ref{sec:experiments}.

\vspace{-2mm}
\subsection{Method Complexity}
\label{sec:complexity}
\vspace{-2mm}
Consider a dataset with $N$ observations, partitioned over
$M$ machines.
Assume we have approximated each subposterior using NVI 
with $K$ components, where each component is defined 
by a $d$-dimensional parameter.
Computing all components of the product mixture exactly 
requires $O(dMK^M)$ operations. 
Computing $R$ samples from the product mixture approximation via
Algorithm~\ref{sampleProductComponents} requires $O(dRM)$ 
operations (assuming a constant number $b$ of burn-in steps). 
Computing sequential subposterior subset product samples with $R$ 
samples at each intermediate product requires $O(d R M^2)$
operations overall, but this could be reduced to $O(d R \log M)$ 
operations on a single machine if each of the $O(\log M)$ rounds of sampling 
are done in parallel.

Each machine learns and then communicates the optimal
variational parameters, which consist of $K$ mean parameter
vectors (each in $d$ dimensions), $K$ variance parameter
scalars, and $K$ weight parameter scalars. In total,
$MK(d+2)$ scalars are communicated throughout the entire
procedure.





\begin{figure*}[!ht]
    \center{\includegraphics[width=0.9\textwidth]{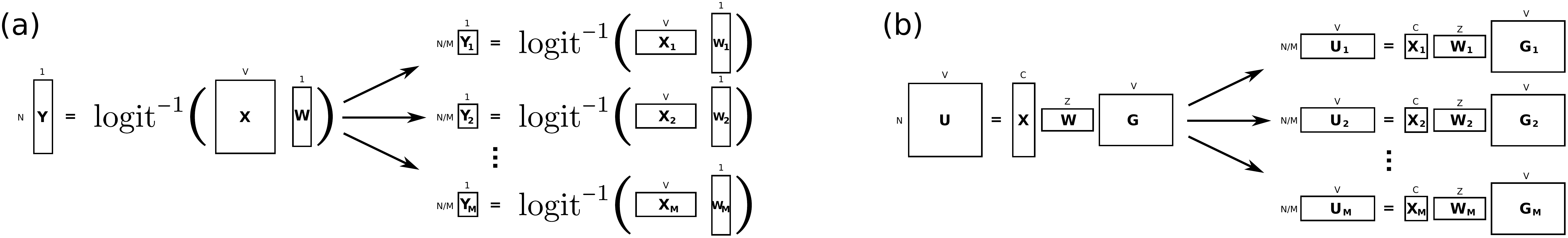}}
    \caption{\label{fig:modelDiagram} Diagrams of the data
    partitioning schemes for the (a) hierarchical logistic
    regression model and (b) topographic latent source
    analysis model.}
\end{figure*}

\vspace{-2mm}
\subsection{Method Scope}
\label{sec:scope}
\vspace{-2mm}
The algorithms described in this paper hold for posteriors distributions
with twice-differentiable densities in finite-dimensional 
real spaces. This method may be applied to nonconjugate 
Bayesian models, and models with multimodal posteriors, 
with little further restriction on the form of the 
model and prior distribution.

However, there are certain model types for which our 
method is not appropriate. These include discrete models, 
continuous posterior distributions over the simplex, and 
infinite dimensional models. Furthermore, our method 
may not be well suited to posteriors with high 
correlations or different scales between dimensions, 
and multimodal models suffering from label switching.

\vspace{-3mm}
\section{Empirical Study}
\label{sec:experiments}
\vspace{-2mm}
We demonstrate empirically the ability of our method to
significantly speed up VI on nonconjugate models in a 
distributed setting, while maintaining an accurate posterior
approximation. In particular, our experiments aim to show that:
\vspace{-3mm}
\begin{enumerate}
\item We can perform inference in a fraction of the time of
nonparallel VI methods as we increase M.
\item We can achieve similar performance as nonparallel VI
methods as we increase M.
\item Expectations computed via our product mixture sampling
method (Algorithm~\ref{sampleProductComponents}) achieve
similar performance as those computed via exact computation
of the product mixture.
\end{enumerate}
\vspace{-2mm}
To demonstrate these, we conduct experimental comparisons 
with the following strategies:
\begin{itemize}[nosep]
\item \textbf{Full-data nonparametric variational inference}
    (\texttt{NVI})---A (nonparallel) variational inference
    method designed for use in nonconjugate models, which we
    run on the full dataset. This method takes as a
    parameter the number of mixture components $K$.
\item \textbf{Subposterior inference on data subsets}
    (\texttt{Subposteriors})---The subposterior variational
    approximations, run on subsets of data. This method
    takes as a parameter the number of mixture components
    $K$, and each run returns $M$ of these approximations.
\item \textbf{Embarrassingly parallel variational inference
    (exact product)}
    (\texttt{EPVI\_exact})---The method introduced in this
    paper, which combines the $M$ subposteriors by computing
    all components of the product mixture density.
\item \textbf{Embarrassingly parallel variational inference
    (mixture product sampling)}
    (\texttt{EPVI\_sample})---The method introduced in this
    paper (Algorithm~\ref{sampleProductComponents}), which 
    samples from the product of the $M$ subposterior
    approximations.
\item \textbf{Embarrassingly parallel variational inference
    (sequential subset products)}
    (\texttt{EPVI\_subset})---The method introduced in this
    paper, which samples from products of pairs of
    subposteriors sequentially.
\end{itemize}
We do not aim to compare the benefits of VI in general in
this work, and therefore exclude comparisons against
alternative approximate inference methods such as MCMC,
expectation propagation, or other deterministic dynamics
inference algorithms (such as herding or Bayesian
quadrature). To assess the performance of our method, we
compute the log-likelihood of held-out test data given our
inferred variational approximation (which we can compute in
a consistent manner for many types of models).

Experiments were conducted with a standard cluster system.
We obtained subposterior variational approximations by
submitting batch jobs to each worker, since these jobs are
all independent. We then saved the results to the disk of
each worker and transfered them to the same machine, which
performed the product mixture sampling algorithms.  In each
of the following experiments involving timing, we first ran
the variational inference optimization procedures until
convergence (to provide a time for the
\texttt{Subposteriors} and \texttt{NVI} strategies).
Afterwards, we added the (maximum) time required for
learning the subposterior approximations, the time needed to
transfer the learned parameters to a master machine, and the
time required to run the product mixture sampling algorithms
(to provide a time for the \texttt{EPVI} methods).

\begin{figure*}[!ht]
    \includegraphics[width=0.93\textwidth]{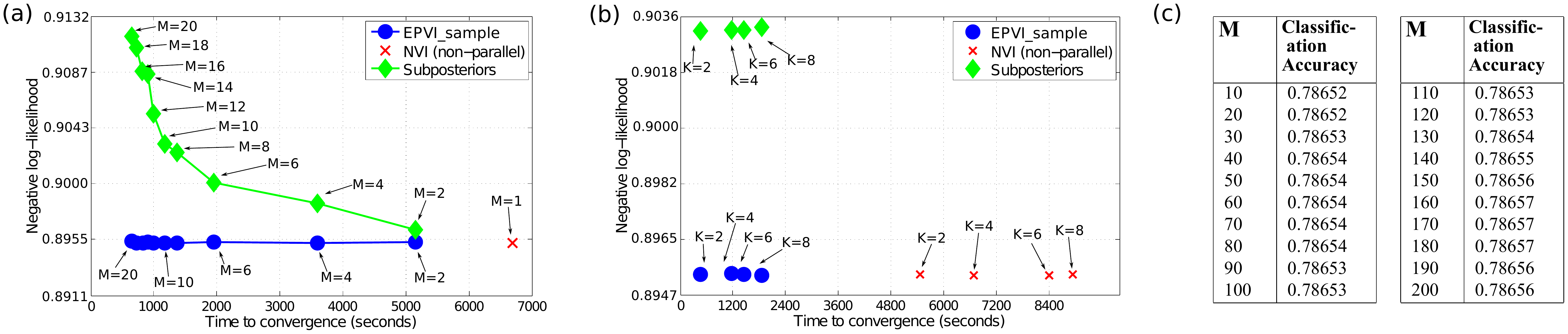}
    \caption{\label{fig:logregResults} Experimental results
    for hierarchical Bayesian logistic regression under
    varying numbers of (a) data-splits M and (b) NVI mixture
    components $K$. In (c) we show that the
    \texttt{EPVI\_sample} method maintains a consistent classification
    accuracy over a wide range of $M$.}
\end{figure*}

\begin{figure*}[!ht]
    \center{\includegraphics[width=1\textwidth]{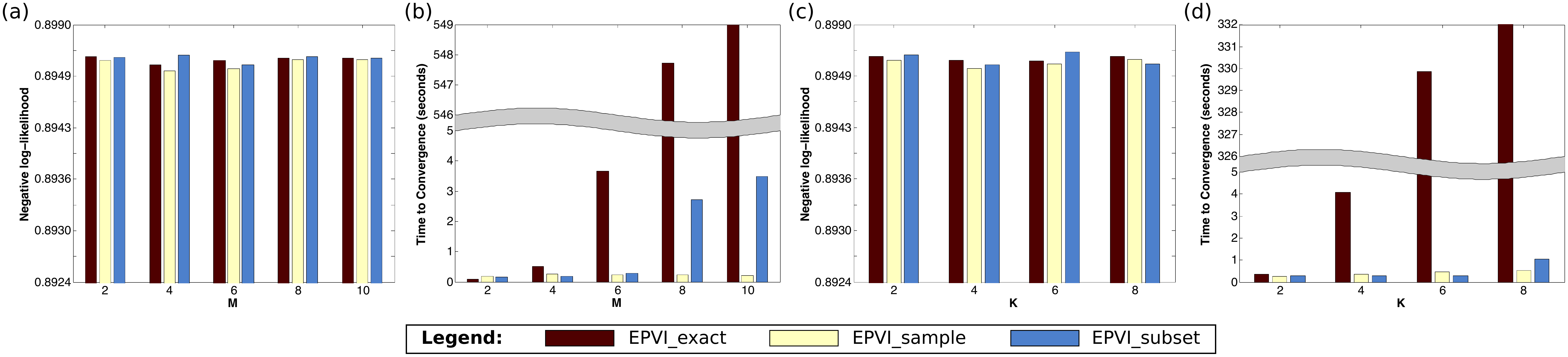}}
    \caption{\label{fig:approxComp} Comparison of the two
    product mixture sampling methods with the exact
    product mixture computation under (a)-(b) varying $M$ and
    (c)-(d) varying $K$.}
\end{figure*}

\vspace{-3mm}
\subsection{Bayesian Generalized Linear Models}
\vspace{-2mm}
Generalized linear models are widely used for a variety of
regression and classification problems. We use a
hierarchical Bayesian logistic regression model as a test
case in the following experiments. This model places a
Gaussian prior on a set of coefficients $\mathbf{w} \in
\mathbb{R}^V$ and draws class labels $\mathbf{y} \in
\mathbb{R}^N$, conditioned on the product of an observation
matrix $\mathbf{X} \in \mathbb{R}^{N \times V}$ and the
coefficients, passed through a logistic transform; further,
Gamma priors are placed on the variance parameter for each
coefficient. Notably, this model lacks conditional
conjugacy. We write the generative model as
\begin{enumerate}[nosep]
\item Draw global hyperparameter $\alpha \sim
    \text{Gamma}\left(a,b \right)$
\item For $v = 1,\ldots,V$, draw coefficient \\$w_v \sim
    \mathcal{N}\left(0,\alpha^{-1}\right)$
\item For $n = 1,\ldots,N$, draw observation \\$y_n \sim
    \text{Bernoulli}\left(\text{logit}^{-1}\left(-\mathbf{w}^\top
    \mathbf{x}_n \right) \right)$
\end{enumerate}
where $\mathbf{x}_n$ denotes the $n^{\text{th}}$ row of
$\mathbf{X}$. We partition the data by splitting
$\mathbf{X}$ and $\mathbf{y}$ into $M$ disjoint subsets each
of size $\frac{N}{M}$, and inferring a variational
approximation on each subset. This is illustrated in
Figure~\ref{fig:modelDiagram}(a).

\vspace{-1mm}
\textbf{Data.} We demonstrate our methods on the SUSY
particles
dataset\footnote{https://archive.ics.uci.edu/ml/datasets/SUSY},
in which the task is to classify whether or not a given
signal (measured by particle detectors in an accelerator)
will produce a supersymmetric particle. This dataset has
$N=5,000,000$ observations, of which we hold out $10\%$ for
evaluating the test log-likelihood.

\vspace{-1mm}
\textbf{Performance under varying $M$.} We vary the number
of data-splits $M$ from 2 to 20, and record the held-out
negative log-likelihood and time taken to converge for each
method.  For the \texttt{Subposteriors} result, we report
the maximum time taken to converge and the average negative
log-likelihood (over the set of $M$ subposteriors). We also
record the time taken to converge and the negative
log-likelihood for the \texttt{NVI} ($M=1$) standard VI
result. The number of mixture components for the NVI part of
all methods is fixed at $K=4$. We plot these results in
Figure~\ref{fig:logregResults}(a), and see that
\texttt{EPVI\_sample} reduces the time to convergence by
over an order of magnitude while maintaining nearly the same
test negative log-likelihood as \texttt{NVI}.  In
Figure~\ref{fig:logregResults}(c) we show that performance
of the \texttt{EPVI\_sample} method does not suffer
as we increase the number of machines over a greater range,
from $M=10$ to $M=200$. In this table, to give a more interpretable
view of performance, we show classification accuracy (on the
held out data) for each $M$. We see that classification 
accuracy stays nearly constant at approximately $0.7865$ 
as we increase $M$ throughout this range.

\begin{figure*}[!ht]
    \center{\includegraphics[width=1\textwidth]{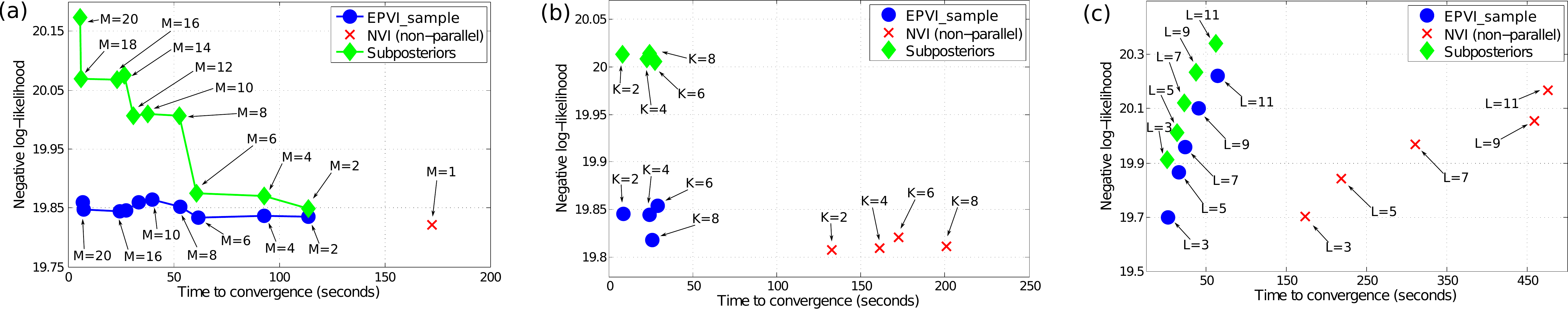}}
    \caption{\label{fig:nlmfResults} Experimental results
    for the nonlinear matrix factorization (topographical
    latent source analysis) model under varying numbers of
    (a) data-splits $M$, (b) mixture components $K$, and (c)
    latent sources $L$.}
\end{figure*}

\vspace{-1mm}
\textbf{Performance under varying $K$.} Next, we vary the
number of NVI mixture components $K$ from 2 to 8, and record
the held-out negative log-likelihood and time taken to
converge for each method. For parallel methods, we fix
$M=10$. We plot these results in
Figure~\ref{fig:logregResults}(b), and see that for all
values of $K$, \texttt{EPVI\_sample} decreases the time to
convergence by nearly tenfold, while maintaining virtually
identical test negative log-likelihood values.

\vspace{-1mm}
\textbf{Product mixture sampling methods.} We also conduct
experiments to judge the quality of our two product mixture
sampling procedures.  We aim to show that our methods yield
similar test log-likelihoods as computing expectations via
the exact product mixture while greatly decreasing the
computation time. We demonstrate this empirically over a
range of $M$ and $K$ values. Note that, since we need to
compare with the exact product, we restrict this range to
values in which we can compute all of the
(exponentially-many) product mixture components. For both
sampling methods, we fix $R=500$. Note that we perform the
$O(\log(M))$ rounds of \texttt{EPVI\_subset} sequentially on
the machine on which all samples are collected (not in
parallel).  We plot our results in
Figure~\ref{fig:approxComp}, and see that our sampling
methods yield very similar held-out negative log-likelihoods
as the exact product over all $M$
(Figure~\ref{fig:approxComp}(a)) and $K$
(Figure~\ref{fig:approxComp}(c)) values. We also see that
for roughly $M>6$ (Figure~\ref{fig:approxComp}(b)) and $K>4$
(Figure~\ref{fig:approxComp}(d)), the time needed to compute
the exact product increases substantially.  Additionally,
\texttt{EPVI\_sample} appears to fare slightly better than
\texttt{EPVI\_subset} in terms of both the test
log-likelihood and computation time.

\vspace{-4mm}
\subsection{Nonlinear Matrix Factorization}
\vspace{-2mm}
We next apply our algorithm to a nonlinear matrix
factorization model known as topographic latent source
analysis (TLSA) \cite{gershman2012nonparametric}.  This
model can be viewed as representing an observed matrix as a
covariate-dependent superposition of $L$ latent sources. In
particular, an observed matrix $\mathbf{U} \in \mathbb{R}^{N
\times V}$ is assumed to be drawn conditioned on an observed
matrix of covariates $\mathbf{X} \in \mathbb{R}^{N \times
C}$, an inferred weight matrix $\mathbf{W} \in \mathbb{R}^{C
\times L}$, and a basis matrix $\mathbf{G} \in \mathbb{R}^{L
\times V}$ constructed by evaluating a parameterized spatial
basis function with parameters
$\{\bar{\mathbf{r}}_l,\lambda_l \}$, written $g_{lv} =
\exp\{\lambda_l^{-1} \|\mathbf{r}_l -
\bar{\mathbf{r}}_l\|^2\}$.  Similar to the previous model,
this model lacks conditional conjugacy. We can write the
full generative process as
\begin{enumerate}[nosep]
\item For latent source $l=1,\ldots,L$,
\begin{enumerate}[nosep]
    \item Draw hyperparameter $\lambda_l \sim
        \text{Exponential}(\rho)$
    \item For $d = 1,\ldots,M$, draw $\bar{r}_{ld} \sim
        \text{Beta}(1,1)$
    \item For $c = 1,\ldots,C$, draw $w_{cl} \sim
        \mathcal{N}(0,\sigma_w^2)$
\end{enumerate}
\item For $n = 1,\ldots,N$,
\begin{enumerate}[nosep]
    \item For $v = 1, \ldots, V$, draw observation \\$u_{nv}
        \sim \mathcal{N}\left(\sum_{c=1}^C x_{nc} \sum_{l=1}^L
        w_{cl}g_{lv}, \tau^{-1} \right)$
\end{enumerate}
\end{enumerate}
We partition the data by splitting observed matrices
$\mathbf{U}$ and $\mathbf{X}$ into $M$ disjoint subsets each
of size $\frac{N}{M}$, and inferring a variational
approximation on each subset. This is illustrated in
Figure~\ref{fig:modelDiagram}(b).

\vspace{-1mm}
\textbf{Data.} In the following experiments, we generate $N
= 1,000$ observations in $V = 50$ dimensions by choosing
hyperparameters $\{\tau=1,\text{ }\sigma_w^2=5,\text{
}\rho=1\}$, and drawing from the above generative process.
We hold out $10\%$ of the data for evaluating the test
log-likelihood.

\vspace{-1mm}
\textbf{Performance under varying $M$, $K$, and $L$.}
Similar to the previous model, we first conduct experiments
showing held-out negative log-likelihood versus time under
varying values for the number of data-splits $M$ and NVI
mixture components $K$. These results are shown in
Figure~\ref{fig:nlmfResults}(a)-(b). We see that
\texttt{EPVI\_sample} reduces the time to convergence
(particularly as the number of subposteriors $M$ increases)
while maintaining a similar test negative log-likelihood as
\texttt{NVI}.  We also evaluate the performance of our
method under different numbers of latent sources $L$. We
vary $L$ from 2 to 8, and record the held-out negative
log-likelihood and time taken to converge, and again see
positive results (Figure~\ref{fig:nlmfResults}(c)).

\vspace{-4mm}
\section{Conclusion}
\vspace{-3mm}
In this paper, we developed an embarrassingly parallel VI
algorithm for Bayesian inference in a distributed setting,
that does not require models with conditional conjugacy and
exponential family assumptions.
Unlike existing methods,
our strategy uses a decomposition of the full-data posterior
involving a product of subposterior densities, which was
recently developed in the parallel MCMC literature. We have
shown promising empirical results on nonconjugate
models, which illustrate the ability of our method to
perform VI in a distributed setting, and provide large
speed-ups, while maintaining an accurate posterior
approximation.

\bibliography{main}
\bibliographystyle{plain} 

\end{document}